\documentclass[10pt,twocolumn,letterpaper]{article}

\usepackage{cvpr}
\usepackage{times}
\usepackage{epsfig}
\usepackage{graphicx}
\usepackage{amsmath}
\usepackage{amssymb}
\usepackage{amsthm}
\usepackage{dsfont}


\usepackage[pagebackref=true,breaklinks=true,letterpaper=true,colorlinks,bookmarks=false]{hyperref}

\newtheorem{proposition}{Proposition}
\makeatletter
\newcommand\footnoteref[1]{\protected@xdef\@thefnmark{\ref{#1}}\@footnotemark}
\makeatother
\cvprfinalcopy 


\ifcvprfinal\pagestyle{empty}\fi
\begin{document}

\title{OL\'E: Orthogonal Low-rank Embedding,\\A Plug and Play Geometric Loss for Deep Learning}

\author{
\begin{tabular}{cccc} 
Jos\'e Lezama$^{1}$\thanks{Corresponding author {\tt jlezama@fing.edu.uy}} 
& Qiang Qiu$^2$ & Pablo Mus\'e$^1$ & Guillermo Sapiro$^2$\\ [0.7em]
\multicolumn{2}{c}{ \begin{tabular}{c}$^1$Universidad de la Rep\'ublica\\ Uruguay\ \end{tabular}} & 
\multicolumn{2}{c}{\begin{tabular}{c}$^2$Duke University\\ USA\end{tabular}}\\[0.3em]
\end{tabular}
}


\maketitle

\begin{abstract}
Deep neural networks trained using a softmax layer at the top and the
cross-entropy loss are ubiquitous tools for image classification. Yet, this does
not naturally enforce intra-class similarity nor inter-class margin of the
learned deep representations. To simultaneously achieve these two goals,
different solutions have been proposed in the literature, such as the pairwise
or triplet losses. However, such solutions carry the extra task of selecting
pairs or triplets, and the extra computational burden of computing and learning
for many combinations of them. In this paper, we propose a plug-and-play loss
term for deep networks that explicitly reduces intra-class variance and enforces
inter-class margin simultaneously, in a simple and elegant geometric manner. For
each class, the deep features are collapsed into a learned linear subspace, or
union of them, and inter-class subspaces are pushed to be as orthogonal as
possible. Our proposed Orthogonal Low-rank Embedding (OL\'E) does not require
carefully crafting pairs or triplets of samples for training, and works
standalone as a classification loss, being the first reported deep metric
learning framework of its kind.  Because of the improved margin between features
of different classes, the resulting deep networks generalize better, are more
discriminative, and more robust. We demonstrate improved classification
performance in general object recognition, plugging the proposed loss term into
existing off-the-shelf architectures. In particular, we show the advantage of
the proposed loss in the small data/model scenario, and we significantly advance
the state-of-the-art on the Stanford STL-10 benchmark. 

\end{abstract}

\begin{figure*} [t!]
  \begin{center}
\begin{tabular}{cccc} 
 \includegraphics[width=.15\textwidth]{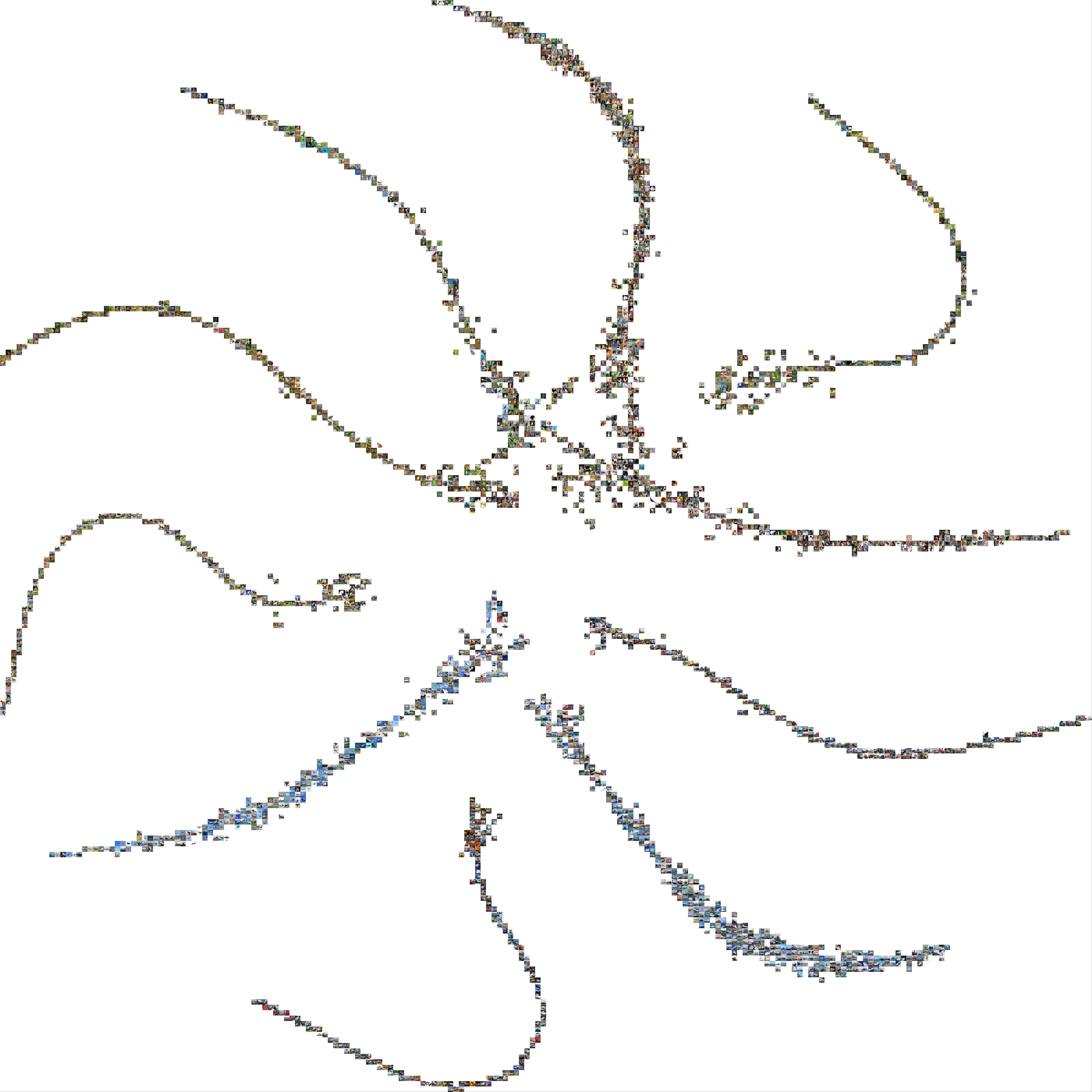} & %
 \includegraphics[width=.15\textwidth]{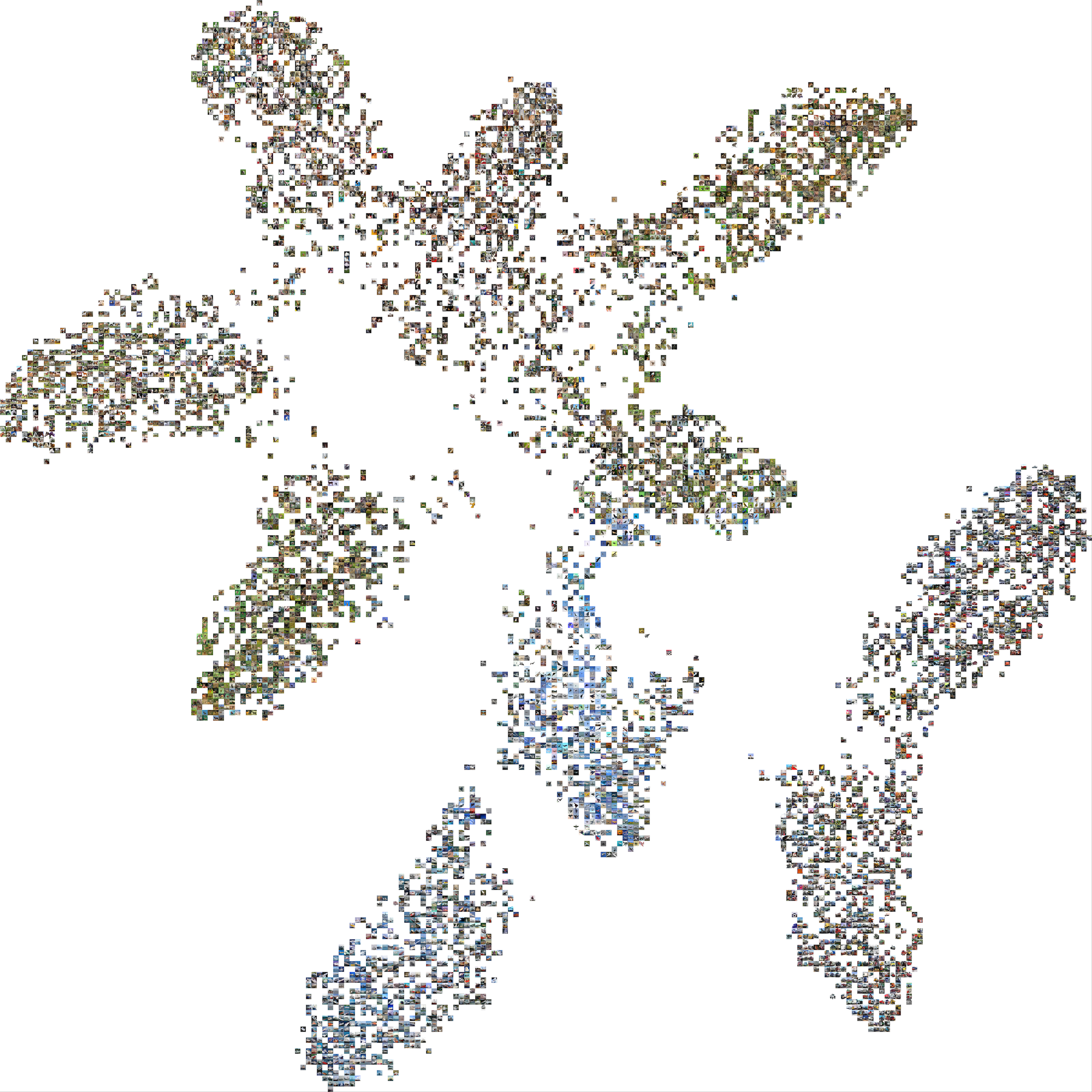} & %
\includegraphics[width=.20\textwidth]{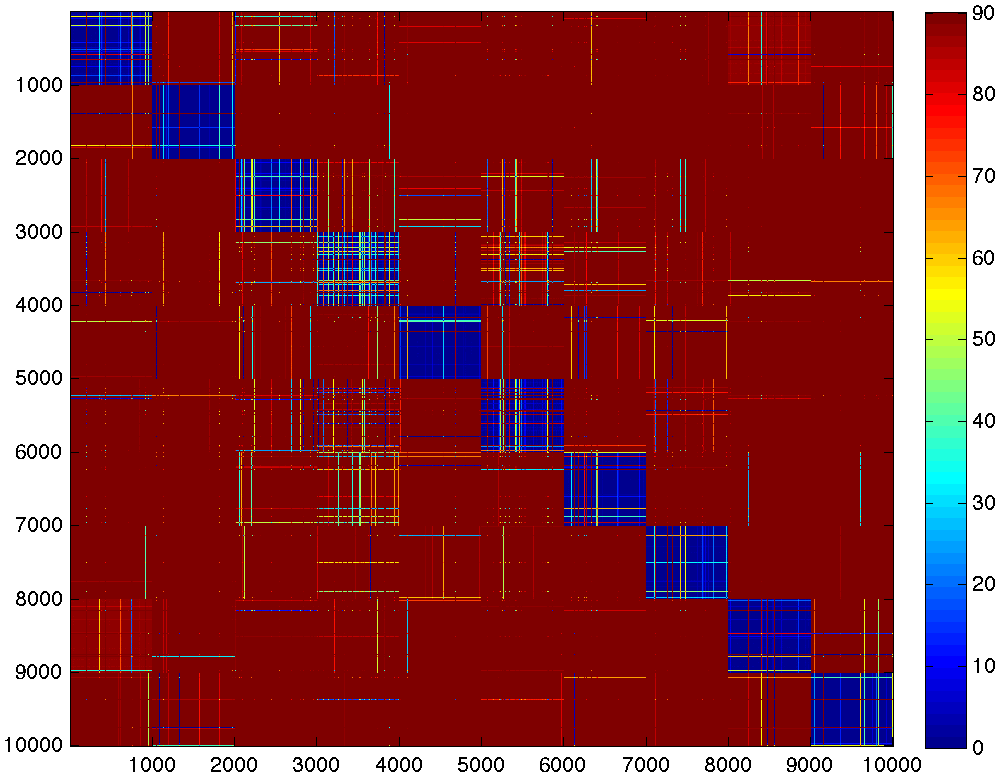} & %
\includegraphics[width=.20\textwidth]{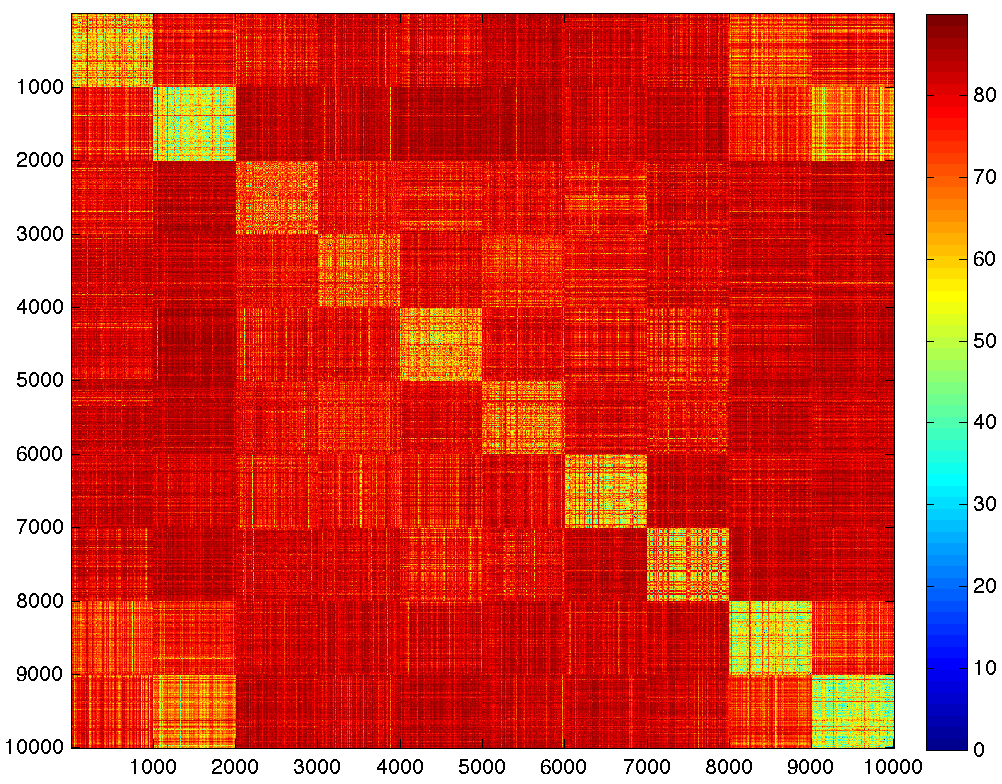}\\ %
\footnotesize{(a)} & \footnotesize{(b)} & \footnotesize{(c)} & \footnotesize{(d)}\\
\end{tabular} 
\end{center} 
  \caption{ Barnes-Hut-SNE \cite{van2013barnes} visualization of the deep
    feature embedding learned for the validation set of CIFAR10, using
    VGG-16. \textbf{(a)} With softmax loss and OL\'E . \textbf{(b)} With softmax
    loss only. The separation between classes is increased, and a low-rank
    structure is recovered for each class. \textbf{(c)} Angle between the features
    of the 10,000 validation samples, ordered by class, with OL\'E. \textbf{(d)}
    Without OL\'E.  With OL\'E the angle between features is collapsed inside
    each class and inter-class features are orthogonal. Best viewed in electronic format.}
\label{fig:cifar_tsne}
\end{figure*}

\begin{figure*} [t!]
  \begin{center}
\begin{tabular}{cc|cc}
\includegraphics[width=.16\textwidth]{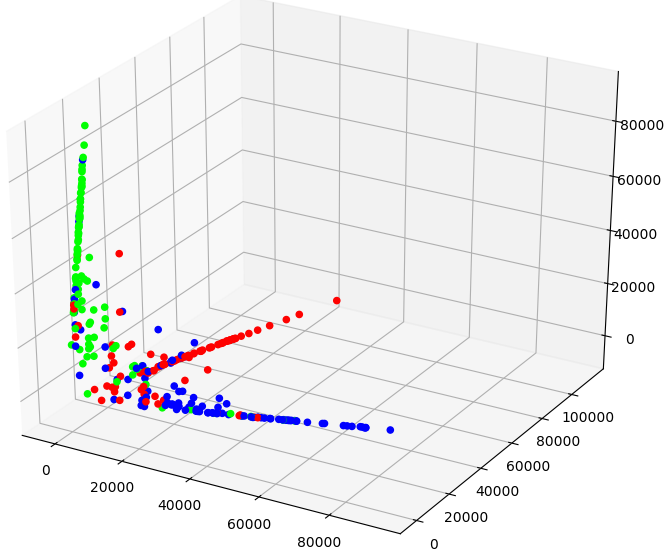}  & 
\includegraphics[width=.16\textwidth]{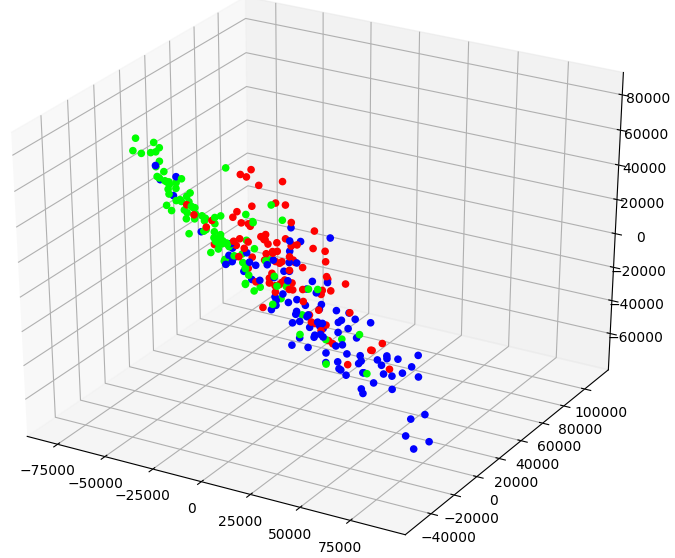} & 
\includegraphics[width=.16\textwidth]{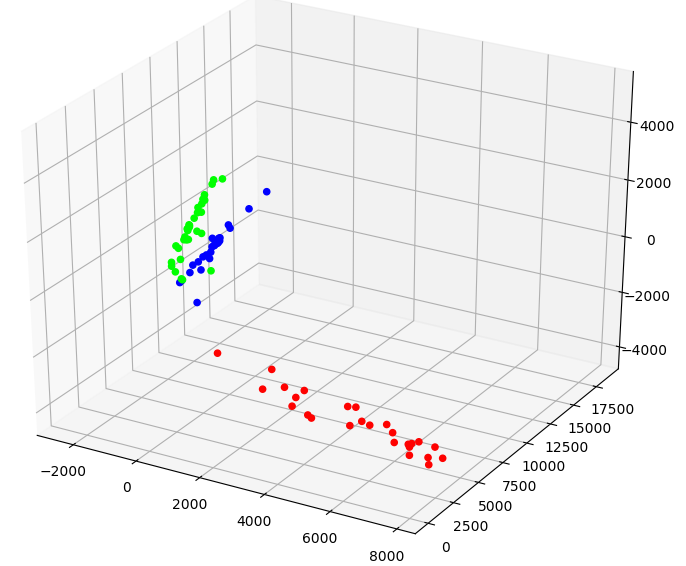} & 
\includegraphics[width=.16\textwidth]{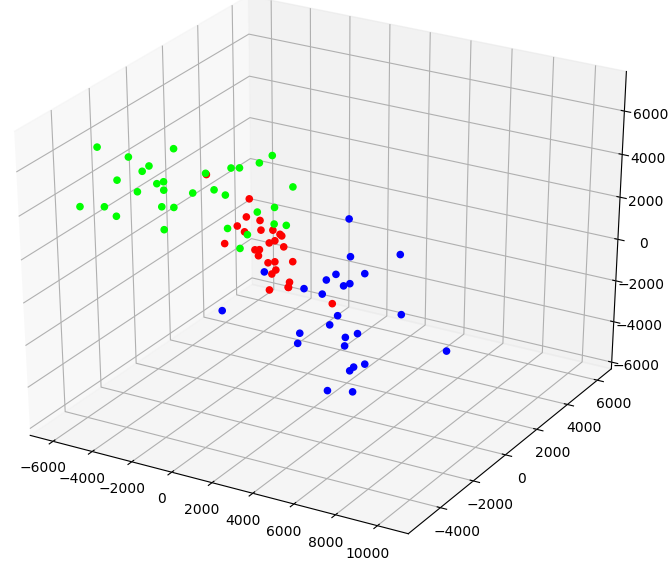} \\ 
\scriptsize{(a) OL\'E: 78.38\% accuracy} & %
\scriptsize{(b) Softmax:  76.69\% accuracy} &
\scriptsize{(c) OL\'E: 100.0\% accuracy} &
\scriptsize{(d) Softmax:  96.47\% accuracy} \\
\multicolumn{2}{c}{\scriptsize CIFAR 3 classes} & 
\multicolumn{2}{c}{\scriptsize Facescrub 3 classes}\\
\end{tabular}
\end{center} 
  \caption{Illustrative comparison between OL\'E loss (standalone) and softmax
    loss.  We show the actual 3D deep feature vectors for the validation images
    in two 3-class classification problems with scarce training data. OL\'E
    produces intra-class compactness and inter-class orthogonality, and is able to
    achieve better classification performance than the softmax loss.
    \textbf{(a)} \& \textbf{(b)} 3 classes of CIFAR10, trained with 1,000 images per
    class. A 4 layer, 100 hidden units MLP was used.  \textbf{(c)} \& \textbf{(d)} 3
    subjects of Facescrub dataset, trained with 110 images per class on
    average. A 3 layer, 10 hidden units MLP was used. See text for more
    details. Best viewed in electronic format.  }
\label{fig:toy_example}
\end{figure*}

\section{Introduction}
In the last few years the representational power of Deep Neural Networks (DNNs)
has been thoroughly demonstrated, with impressive results in learning useful
representations for difficult tasks such as object classification and detection
\cite{he2017mask, he2016deep,huang2016densely,
  krizhevsky2009learning,simonyan2014very} and face identification and verification
\cite{Parkhi15, schroff2015facenet,wen2016discriminative}, to name just a few
examples.  DNNs typically consist of a sequence of convolutional and/or
fully-connected layers with non-linear activation functions, which produce a
``deep'' feature vector, which is then classified in the last layer with a
linear classifier \cite{he2016deep, huang2016densely, krizhevsky2009learning,
  simonyan2014very}. This linear classifier typically uses the softmax function
with the cross-entropy loss. The combination of these two will be referred to as
\emph{softmax loss} in the rest of this article. The layer previous to the last linear classifier will
be referred to as the \emph{deep feature layer}.

Training a DNN with the standard softmax loss does not explicitly enforce an
embedding of the learned deep features where samples of the same class are
closer together and further away from other classes. To improve the
discrimination power of deep neural networks, previous approaches have tried to
enforce such an embedding via auxiliary supervisory loss functions
\cite{lee2015deeply} acting on the Euclidean distances between the deep features
\cite{cheng2016person,hadsell2006dimensionality,hu2014discriminative,
  schroff2015facenet, sun2014deep, wen2016discriminative}. Such metric learning
techniques are particularly popular in the face identification domain, with its
two most representative examples being the pairwise loss
\cite{hadsell2006dimensionality} and the triplet loss
\cite{schroff2015facenet}. The drawback with these approaches is that they
require the careful selection of pairs or triplets of samples, as well as extra
data processing. More recently, methods have been proposed to overcome this
limitation by enforcing intra-class compactness of the representations inside
each random training minibatch,
\cite{liu2017sphereface,liu2016large,wen2016discriminative}.

In this work we propose to improve the discriminability of a neural network by a
simple and elegant plug-and-play loss term that, acting on the deep feature
layer, encourages the learned deep features of the same class to lie in a linear
subspace (or union of them), and at the same time that inter-class subspaces are
orthogonal, see Figs.~\ref{fig:cifar_tsne},~\ref{fig:toy_example}. To the best of
our knowledge, this is the first time a deep learning framework is proposed that
simultaneously reduces intra-class variance and increases inter-class margin,
without requiring pair or triplet selection.

Our intuition is based on the following observations. First, that the decision
boundary for the softmax loss is determined by the angle between the feature
vector and the vectors corresponding to each class in the last linear classifier
\cite{liu2016large}. Since the weights are initialized randomly, the class
vectors are, with high probability, orthogonal at initialization, and typically
remain so after training. Moreover, if a rectified linear unit (ReLU) is the
last activation function, the deep features will live in the positive
orthant. Therefore, one way to improve the margin between deep features is to embed
them into orthogonal, low-dimensional linear subspaces, aligned with the
classifier vector of each class.

To this end, we adapt a shallow feature orthogonalization technique
\cite{qiu2015learning} to deep networks. Through novel theoretical insight, we
improve the objective formulation in \cite{qiu2015learning} and its
optimization. The outcome is a new loss function that can be plugged into any
existing deep architecture at the deep feature layer. We demonstrate via
thorough experimentation that this approach produces orthogonal deep
representations that lead to better generalization, not only in face
identification but also in general object recognition. We illustrate this on
different datasets and using four of the most popular CNN architectures: VGG
\cite{simonyan2014very}, ResNets \cite{he2016deep}, PreResNets \cite{ he2016identity} and DenseNets
\cite{huang2016densely}.

We demonstrate that our proposed technique is particularly successful in the
small data scenario. We significantly advance the state-of-the-art in the STL-10
standard benchmark \cite{coates2011analysis} when training with only 500 images
per class, and show that the advantage of OL\'E over the standard softmax loss
increases as fewer training samples are used.  We also show through a face
recognition application that because of the improved discriminability, the
network is better at detecting novel classes (outside the training set).

The source code for OL\'E is publicly available\footnote{\scriptsize \url{https://github.com/jlezama/OrthogonalLowrankEmbedding}}.

This paper is organized as follows. In Section~\ref{sec:related} we discuss
related work on similarity preserving and large margin deep networks. In
Section~\ref{sec:loss} we motivate and describe the orthogonalization loss and
its optimization, as well as a warming-up example. Experimental results are
presented in Section~\ref{sec:experimental}.  We conclude the paper in
Section~\ref{sec:conclusions}

\section{Related Work}\label{sec:related}
The first attempts to reduce the intra-class similarity of deep features and
increase their inter-class separation are metric learning based approaches
\cite{cheng2016person,hadsell2006dimensionality,hu2014discriminative,
  schroff2015facenet, sun2014deep, wen2016discriminative}. Their goal is to
minimize the Euclidean distance between the deep features of the same class,
while keeping the other classes apart. The pioneering contrastive loss
\cite{hadsell2006dimensionality} imposes such constraint using a siamese network
architecture \cite{chopra2005learning}. This pairwise strategy was particularly
popular in the face identification community \cite{hadsell2006dimensionality,
  sun2014deep, hu2014discriminative}, and was later extended to a triplet loss
\cite{schroff2015facenet, cheng2016person}.  With the triplet loss, an image
representation is simultaneously enforced to be close to a positive example of
the same class and far away from a negative example of a different class. The
main drawback of these approaches is that they require carefully mining for
pairs or triplets that effectively enforce the constraints.

A different strategy to encourage intra-class compactness
was proposed in \cite{wen2016discriminative}, where a centroid for the deep
representation of each class is updated in each training iteration, and
Euclidean distances to the centroids are penalized. This simple strategy produces compact
clusters for each class, although a large margin between clusters is not
explicitly enforced. Contrary to our method, the center loss cannot be
used standalone as a classification loss, since all the centroids tend to 
collapse to zero \cite{wen2016discriminative}. A related approach in
\cite{rippel2015metric} estimates a distribution  for the representation
of each class and penalizes class distribution overlap.

Based on the observation that the softmax loss is a function of the angles
between deep features and classifier vectors, a novel family of methods have been
proposed in \cite{liu2017sphereface, liu2016large}, that operate on such angles
and not on Euclidean distances. These works propose custom versions of the
softmax loss that encourage the features of one class to have a smaller angle
with their corresponding classification vector than in the standard softmax
loss. The improved margin produces notorious performance boosting with respect
to a standard network \cite{liu2017sphereface, liu2016large}.


In the unsupervised learning domain, a very recent family of methods proposes to
enforce a locally linear structure in the deep representations, such that
Subspace Clustering \cite{vidal2005generalized} can be later applied to the deep
representations \cite{ji2017deep, peng2017deep}.  These properties will arise
naturally in the deep representations learned with OL\'E, although imposed in a
supervised manner.

Our work is related to \cite{liu2017sphereface, liu2016large,
  wen2016discriminative}, in that we enforce intra-class compactness inside each
minibatch. However, for the first time, our objective function also
simultaneously encourages inter-class orthogonality, without the need to
carefully craft pairs or triplets.


This work stems from an orthogonalization technique used for shallow learning
proposed in \cite{qiu2015learning}. The orthogonalization is achieved via a
linear transformation enforcing a low-rank constraint on the features of the
same class, and a high-rank constraint on the matrix of features of all classes.
More precisely, consider a matrix $ \mathbf{Y} = \left[ \mathbf{y}_1 ~|~ \mathbf{y}_2 ~|~ \ldots ~|~ \mathbf{y}_N \right]$, where each column $\mathbf{y}_i \in
\mathbb{R}^d, i=1, \ldots ,N$ is a data point from one of the $C$ classes, and $|$
denotes horizontal concatenation. Let $\mathbf{Y}_c$ denote the submatrix formed
by the columns of $\mathbf{Y}$ that lie in the $c$-th class.  In
\cite{qiu2015learning}, a linear transform $ \mathbf{T}: \mathds{R}^d\rightarrow \mathds{R}^d$ is learned to minimize
\begin{align} \label{eq:nuclear_objective_original}
{ \sum_{c=1}^C ||\mathbf{T Y}_c||_* - ||\mathbf{T Y}||_*, ~~\mathrm{s.t.} ||\mathbf{T}||_2 = 1,}
\end{align}
where $||\mathbf{\cdot}||_*$ denotes the matrix nuclear norm, i.e., the sum of
the singular values of a matrix.  The nuclear norm is the convex envelope of the
rank function over the unit ball of matrices \cite{recht2010guaranteed}.  An
additional condition $||\mathbf{T}||_2 = 1$ is originally adopted to prevent the
trivial solution $\mathbf{T}=0$.

Here we adapt the loss in \eqref{eq:nuclear_objective_original} to the deep
learning framework and reformulate the loss and its optimization in a manner that
is suitable for training by backpropagation. 

\section{Orthogonalization Loss}\label{sec:loss}
\subsection{Motivation}
Consider a neural network whose last fully connected layer is $\mathbf{W} \in
\mathds{R}^{C\times D}$, where $D$ is the dimension of the deep features and $C$
the number of classes. Each row $\mathbf{w}_c \in \mathds{R}^D$ of $\mathbf{W}$
represents a linear classifier for class $c$. If $\mathbf{W}$ is initialized
randomly, then such rows are (with high probability) orthogonal. Now consider
$\mathbf{x}$ as the deep representation of an image (or any other data being
classified). If the activation function in the deep feature layer is the
element-wise maximum between $\mathbf{x}$ and $0$ (ReLU), then $\mathbf{x}$
always lives in the positive orthant. From these two observations it can be
deduced that at the end of a successful training of the network the classifier
vectors $\mathbf{x}_c$ should remain orthogonal to have the most separation
between classes.  Therefore, one strategy to learn large-margin deep features is
to make the intra-class features fall in a linear subspace aligned with the
corresponding classification vector, while features of different classes should
be orthogonal to each other.  This natural geometry of learned features is not
imposed by standard last-layer classifiers in today's leading architectures.

\subsection{Definition}
We propose to enforce the aforementioned orthogonalization by adapting
\eqref{eq:nuclear_objective_original} to the deep learning setting. Namely,
suppose for a given training minibatch $\mathbf{Y}$ of $N$ samples,
$\mathbf{X}=\Phi(\mathbf{Y};\mathbf{\theta})$ is the $N\times D$ deep embedding $\Phi$ of
the data, parameterized by $\mathbf{\theta}$.

Let $\mathbf{Y}_c$, $\mathbf{X}_c$ be the data and the sub-matrix of deep
features belonging to class $c$, respectively, and $\mathbf{X}$ the matrix of
deep features for the entire minibatch $\mathbf{Y}$. We propose the following
OL\'E loss:
\begin{align}\label{eq:deep_objective}
\mathbf{L}_{o}(\mathbf{X}) &:={ { \sum_{c=1}^C \max(\Delta, ||\mathbf{X}_c||_*) - ||\mathbf{X}||_*}}\\
&= { { \sum_{c=1}^C \max(\Delta, ||\Phi(\mathbf{Y}_c;\theta)||_*) - ||\Phi(\mathbf{Y};\theta)||_*}}
\end{align}

With respect to \eqref{eq:nuclear_objective_original}, we drop the linear
transformation $T$ (the network is already transforming the data) and its
normalization restriction, and we add a bound $\Delta \in \mathds{R}$ on the
intra-class nuclear loss, so that after a certain point the intra-class norm
reduction is no longer enforced, thus avoiding the collapse of the features to
zero  (and therefore no needing the normalization).  We will always use $\Delta=1$ for the experiments in this paper.

The global minimum of \eqref{eq:deep_objective} is reached when each of the
$\mathbf{X}_c$ matrices are orthogonal to each other \cite{qiu2015learning}.  We
next describe a simple descent direction for optimizing $\mathbf{\theta}$
\eqref{eq:deep_objective} via backpropagation, and show that this direction
vanishes only when the orthogonalization is achieved.

\subsection{Optimization}
In order to optimize \eqref{eq:deep_objective} via backpropagation, we need to
compute a subgradient of the nuclear norm of a matrix. Let $\mathbf{A}
=\mathbf{U}\mathbf{\Sigma}\mathbf{V}^T$ be the SVD decomposition of the $m\times
n$ matrix $\mathbf{A}$. Let $\delta$ be a small threshold value, and $s$ the
number of singular values of $\mathbf{A}$ larger than $\delta$. Let
$\mathbf{U}_1$ be the first $s$ columns of $\mathbf{U}$ and $\mathbf{V}_1$ be
the first $s$ columns of $\mathbf{V}$ (corresponding to those larger than
$\delta$ eigenvalues). Correspondingly, let $\mathbf{U}_2$ be the remaining
columns of $\mathbf{U}$ and $\mathbf{V}_2$ the remaining columns of
$\mathbf{V}$. Then, a subdifferential of the nuclear norm is
(\cite{recht2010guaranteed, watson1992characterization})
\begin{equation}
  \partial{||\mathbf{A}||_*} = \mathbf{U}_1\mathbf{V}_1^T + \mathbf{U}_2\mathbf{W}\mathbf{V}_2^T,
\end{equation}
with $||\mathbf{W}||\leq 1$.  

Here we propose to use $\mathbf{W}=0$, obtaining the following projected
subgradient for the nuclear norm minimization problem:
\begin{equation}
g_{||A||_*}(A) = \mathbf{U}_1\mathbf{V}_1^T.  
\end{equation}

Intuitively, to avoid numerical issues, we are dropping the directions of the
subgradient onto which the data matrix has no or very low energy already (i.e.,
their corresponding singular values are already close to 0).  This improves upon
the formulation  in \cite{qiu2015learning}, where all the directions were
used.

Suppose $\mathbf{X} = \left[ \mathbf{X}_1 ~|~ \mathbf{X}_2 ~|~ \ldots ~|~
  \mathbf{X}_C \right]$ is the deep feature matrix of one minibatch. For
$\mathbf{X}_c$, the feature submatrix of each class $c\in\{1, \ldots ,C\}$, let
$\mathbf{U}_{1c}$ and $\mathbf{V}_{1c}$ be its principal left and right singular
vectors. Let $\mathbf{U}_1$ and $\mathbf{V}_1$ be the principal left and right
singular vectors of $\mathbf{X}$, the deep feature matrix of all the classes
combined. (By principal we mean those whose corresponding singular value is
greater than the threshold $\delta$.) Then, we propose the following descent
direction for \eqref{eq:deep_objective}:

\begin{equation}\label{eq:descent_direction}
g_{L_o}(\mathbf{X}) := \sum_{c=1}^C \left[ \mathbf{Z}_c^{(l)} ~|~ \mathbf{U}_{c1}\mathbf{V}_{c1}^T  ~|~ \mathbf{Z}_c^{(r)} \right] - \mathbf{U}_1\mathbf{V}_1^T.
\end{equation}
Here, $\mathbf{Z}^{(l)}_c$ and $\mathbf{Z}^{(r)}_c$ are fill matrices of zeros
to complete the dimensions of $\mathbf{X}$. The first term in
\eqref{eq:descent_direction} reduces the variance of the principal components of
the per-class features. The second term increases the variance of all the
features together, projecting the feature matrix onto its closest orthogonal
form.

Next we prove that this direction vanishes only when the objective reaches the global
minimum of zero.

\begin{proposition}
  If $g_{L_o}(\mathbf{X}) = 0$ and $||X_c||_*>\Delta$, then $L_o(\mathbf{X}) = 0$.
\end{proposition}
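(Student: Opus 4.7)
Since $\|\mathbf{X}_c\|_*>\Delta$ for every $c$, the $\max$ in \eqref{eq:deep_objective} is attained by $\|\mathbf{X}_c\|_*$ in each term, and the loss collapses to $L_o(\mathbf{X})=\sum_{c}\|\mathbf{X}_c\|_*-\|\mathbf{X}\|_*$. It therefore suffices to derive the scalar identity $\|\mathbf{X}\|_*=\sum_{c}\|\mathbf{X}_c\|_*$ from the matrix hypothesis $g_{L_o}(\mathbf{X})=0$.

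\textbf{Strategy.} My plan is to pair the matrix equation
\[\mathbf{U}_1\mathbf{V}_1^T \;=\; \sum_{c=1}^C\bigl[\mathbf{Z}_c^{(l)}\mid \mathbf{U}_{c1}\mathbf{V}_{c1}^T\mid\mathbf{Z}_c^{(r)}\bigr]\]
with $\mathbf{X}$ itself under the Frobenius inner product and read off the nuclear norms on both sides. The key structural observation is that the right-hand sum is nothing but the column-wise concatenation $[\mathbf{U}_{11}\mathbf{V}_{11}^T\mid\cdots\mid \mathbf{U}_{C1}\mathbf{V}_{C1}^T]$, whose $c$-th column block sits exactly in the columns of $\mathbf{X}$ occupied by $\mathbf{X}_c$, so the inner product with $\mathbf{X}$ will split cleanly class by class.

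\textbf{The trace calculation.} Using the full SVD $\mathbf{X}=\mathbf{U}\Sigma\mathbf{V}^T$ together with the orthogonality $\mathbf{U}_1^T\mathbf{U}_2=0$, one gets $\mathbf{U}_1^T\mathbf{X}=\Sigma_1\mathbf{V}_1^T$ and hence $\langle\mathbf{U}_1\mathbf{V}_1^T,\mathbf{X}\rangle_F=\mathrm{tr}(\Sigma_1)=\|\mathbf{X}\|_*$, where the last equality uses that the cutoff $\delta$ lies below the smallest nonzero singular value of $\mathbf{X}$ (its intended role as a numerical guard). The identical calculation applied inside each class yields $\langle\mathbf{U}_{c1}\mathbf{V}_{c1}^T,\mathbf{X}_c\rangle_F=\|\mathbf{X}_c\|_*$, and because the Frobenius inner product distributes over a block column-concatenation, the right-hand side pairs to $\sum_c\|\mathbf{X}_c\|_*$. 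Equating the two values gives $\|\mathbf{X}\|_*=\sum_c\|\mathbf{X}_c\|_*$, whence $L_o(\mathbf{X})=0$.

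\textbf{Main obstacle.} The one genuinely delicate point is the bookkeeping around $\delta$: the trace identity above literally evaluates to $\sum_{\sigma_i>\delta}\sigma_i$, which agrees with the nuclear norm only when the spectrum of $\mathbf{X}$ (and of each $\mathbf{X}_c$) avoids the interval $(0,\delta]$. Under the intended reading of $\delta$ as a vanishing safety threshold, or formally in the limit $\delta\to 0^{+}$, this is automatic; otherwise it must be added as an explicit hypothesis. Everything else is routine SVD manipulation.
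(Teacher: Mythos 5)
Your proof is correct, and it takes a genuinely different route from the paper's. The paper argues geometrically: from $g_{L_o}(\mathbf{X})=0$ it factors $\mathbf{U}_1\mathbf{V}_1^T$ as $\left[\mathbf{U}_{A1} \mid \mathbf{U}_{B1}\right]$ times an orthogonal block-diagonal matrix, deduces that $\left[\mathbf{U}_{A1} \mid \mathbf{U}_{B1}\right]$ has orthonormal columns so the class subspaces are mutually orthogonal, and then invokes Theorem~2 of \cite{qiu2015learning} to convert orthogonality of $\mathbf{A}$ and $\mathbf{B}$ into additivity of the nuclear norm. You instead pair the gradient identity with $\mathbf{X}$ under the Frobenius inner product and use the duality fact $\langle\mathbf{U}_1\mathbf{V}_1^T,\mathbf{X}\rangle_F=\mathrm{tr}(\Sigma_1)$ on both sides, reading off $\|\mathbf{X}\|_*=\sum_c\|\mathbf{X}_c\|_*$ directly. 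Your route is shorter, needs no external theorem, and handles $C$ classes at once rather than by the paper's ``extension is straightforward''; what it gives up is the stronger geometric conclusion that the per-class feature subspaces are pairwise orthogonal, which the paper obtains as an intermediate step and which is the property actually advertised in the text. Your handling of the $\delta$ cutoff is also more honest than the paper's: the identity $\mathrm{tr}(\Sigma_1)=\|\mathbf{X}\|_*$ (and likewise per class) requires that no singular value of $\mathbf{X}$ or of any $\mathbf{X}_c$ lies in $(0,\delta]$, a hypothesis the paper's argument also needs implicitly (its $\mathbf{U}_{A2}$, $\mathbf{U}_{B2}$ directions are never controlled) but never states. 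Flagging that as an explicit assumption, or phrasing the result in the limit $\delta\to 0^{+}$, is the right call.
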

\begin{proof}
We give the proof for two classes, its extension to multiple classes is
straightforward.  Let $\mathbf{X} = \left[\mathbf{A} ~|~ \mathbf{B} \right]$ with
$\mathbf{A}$ and $\mathbf{B}$ corresponding to the feature matrices of two
classes. Let $\mathbf{A} = \mathbf{U}_{A1}\mathbf{\Sigma}_{A1}\mathbf{V}_{A1} +
\mathbf{U}_{A2}\mathbf{\Sigma}_{A2}\mathbf{V}_{A2}$ and $\mathbf{B} =
\mathbf{U}_{B1}\mathbf{\Sigma}_{B1}\mathbf{V}_{B1} +
\mathbf{U}_{B2}\mathbf{\Sigma}_{B2}\mathbf{V}_{B2}$ be their SVD decomposition,
where the subscript $1$ corresponds to the singular values larger than the
threshold $\delta$ and the subscript $2$ to the remaining singular values.  Let
$\mathbf{0}$ be a generic matrix of  zeroes, whose size is determined by
context, for simplicity. Then,
\begin{align}
&L_o(\mathbf{X}) = ||A||_* + ||B||_* - ||[A~|~B]||_*, \\
&g_{L_o}(\mathbf{X}) = 
\left[ \begin{array}{c|c} \mathbf{U}_{A1}\mathbf{V}_{A1}^T & \mathbf{0} \end{array} \right] + 
\left[ \begin{array}{c|c} \mathbf{0} & \mathbf{U}_{B1}\mathbf{V}_{B1}^T \end{array} \right] - 
\mathbf{U}_1\mathbf{V}_1^T.
\end{align}
Then, $g_{L_o}(\mathbf{X}) = 0$ implies
\begin{align}
   \mathbf{U}_1\mathbf{V}_1^T &= 
\left[ \begin{array}{c|c} \mathbf{U}_{A1}\mathbf{V}_{A1}^T & \mathbf{0} \end{array} \right] + 
\left[ \begin{array}{c|c} \mathbf{0} & \mathbf{U}_{B1}\mathbf{V}_{B1}^T \end{array} \right]\\
&= \left[ \begin{array}{c|c} \mathbf{U}_{A1} & \mathbf{U}_{B1} \end{array} \right]
\left[\begin{array}{c|c} \mathbf{V}_{A1}^T & \mathbf{0}\\  \mathbf{0} & \mathbf{V}_{B1}^T \end{array} \right]. \label{eq:orthogonal_product}
\end{align}
Since $\mathbf{U}_1$ and $\mathbf{V_1}$ are orthogonal matrices, and the
rightmost matrix in \eqref{eq:orthogonal_product} is also orthogonal, then
$\left[\mathbf{U}_{A1} ~|~ \mathbf{U}_{B1} \right]$ must  be orthogonal.  Since $\mathbf{U}_{A1}$ and $\mathbf{U}_{B1}$
are orthogonal submatrices, this implies that their columns must be orthogonal
to each other.  Then, $\mathbf{A}$ and $\mathbf{B}$ are orthogonal to each other
and thus $L_o(\mathbf{X})=0$ \cite{qiu2015learning}~(Theorem 2).

\end{proof}

In practice, we observe empirical convergence in all our
experiments. Fig.~\ref{fig:convergence} shows typical convergence curves for the
OL\'E loss when used standalone or in combination with the softmax loss.

\subsection{Illustrative  Example}
In Fig.~\ref{fig:toy_example} we show via two simple illustrative examples the
result of applying the OL\'E loss of \eqref{eq:deep_objective} as the objective
function of a neural network. We compare with the result of applying the
traditional softmax loss.

For the first experiment, we used 3 classes from CIFAR10 (0: plane, 1: car,
2: bird) and trained a Multi-Layer Perceptron (MLP), with 4 hidden
layers of 100 neurons each, and a final layer of dimension 3. The network was
trained for 300 epochs on 1,000 images per class and evaluated in 100 images per class.

For the second experiment, we used 3 randomly chosen subject identities from the
Facescrub dataset \cite{ng2014data}: Al Pacino, Helen Hunt, and Sean Bean. Each
identity contains, on average, 110 images for training and 30 for validation. We
used a 3 layer MLP with 10 neurons in each hidden layer, and trained for 150
epochs. All the MLPs use ReLU activation functions, batch normalization and weight decay
and were trained with Adam with learning rate $10^{-4}$. 

For the comparison, the architecture and hyperparameters are shared and only the
objective function is changed. For evaluation, we use 1-Nearest-Neighbor with
cosine distance, (this yielded equivalent or better performance than using the
softmax score). We ran the training 50 times for each architecture and dataset
and kept the model giving the best classification result in the validation set.

In Fig.~\ref{fig:toy_example} we plot the actual 3D deep feature vectors
obtained by the networks for the validation set. We observe a successful
orthogonalization of the learned features when using OL\'E and a better
classification performance, in particular for the Facescrub experiments, where
the number of samples per class is very limited.

In the following section, we will combine the power of the OL\'E and the
softmax loss to achieve significant performance gains, in particular in the
small data scenario.

\subsection{Discussion}
The proposed embedding  has several advantages with
respect to similar  embeddings in the literature:
\begin{itemize}
\item It does not require carefully crafting  pairs or triplets of samples, and
  works simply as a plug-and-play loss that can be appended to any existing
  network architecture.
\item Compared to the Large-Margin Softmax Loss in \cite{liu2016large} or the
  A-softmax loss in \cite{liu2017sphereface}, the OL\'E loss is not restricted
  to be used with a softmax classifier and can be used standalone or as a
  complement of any other loss, or to impose orthogonality at any layer of the
  network.
\item Compared to the Center Loss of \cite{wen2016discriminative}, our deep
  objective function encourages intra-class compactness and inter-class
  separation simultaneously, whereas \cite{wen2016discriminative} does only the
  former. Also, the Center Loss cannot be used standalone.
\item OL\'E collapses the deep features into linear subspaces. When
  used in conjunction with the softmax loss, the linear classifiers of the last
  layer  find a natural form which is a vector aligned with the linear
  subspace.
\end{itemize}

\section{Experimental Evaluation}\label{sec:experimental}
In this section, we demonstrate the improved generalization performance obtained
when using the OL\'E loss in combination with the standard softmax loss for
several popular deep network architectures and different standard visual
classification datasets. We will also further analyze the effect of the proposed
embedding.

In all experiments, we seek to minimize the
combination of the softmax classification loss and the OL\'E loss:
\begin{equation}\label{eq:combined_objective}
\underset{\mathbf{\theta}}{\min} ~ {L_s}(\mathbf{X},\mathbf{y},\mathbf{\theta}) 
+ \lambda \cdot {L_o}(\mathbf{X},\mathbf{y},\mathbf{\theta}^*)  
+ \mu \cdot ||\mathbf{\theta}||^2,
\end{equation}
where $L_S$ is the standard softmax loss (softmax layer plus cross-entropy
loss). The second term $L_o$ is the proposed OL\'E loss \eqref{eq:deep_objective}. The
parameter $\lambda$ controls the weight of the OL\'E loss; $\lambda=0$
corresponds to standard network training. 

Here $\mathbf{\theta}^*$ means every weight in the network except the weights of
the last fully-connected layer, which is the linear classifier. This is because the
OL\'E loss is applied to the deep features at the penultimate layer. The third
term represents the standard weight decay. The values used for these parameters are
detailed below.

\subsection{Datasets}
\textbf{SVHN.} The Street View House Numbers (SVHN) dataset
\cite{netzer2011reading} contains $32 \times 32$ colored images of digits 0 to
9, with $73,257$ images for training and 26,032 for testing. We did not use
the additional unlabeled training images nor performed any data augmentation.

\textbf{MNIST.} The MNIST database contains $28\times 28$ grayscale images of
digits from 0 to 9. The training and testing set contain 60,000 and 10,000
examples respectively. No data augmentation was used.

\textbf{CIFAR10 and CIFAR100.} The two CIFAR datasets
\cite{krizhevsky2009learning} contain $32 \times 32$ colored images from 10 and
100  object classes respectively. Both datasets contain 50,000 images for
training and 10,000 for testing.  When using data augmentation, we append the
suffix '+' to the dataset name. We used the standard data augmentation for
CIFAR: 4 pixel padding, $32 \times 32$ random cropping and horizontal flipping.

\textbf{STL-10.} The Self-Taught Learning 10 (STL-10) dataset
\cite{coates2011analysis} contains $96 \times 96$ colored images from 10 object
categories. Designed for semi-supervised and unsupervised learning, there are
only $500$ training images and $800$ test images with labels per class. 
Data augmentation consisted of 12 pixel padding, and random $96 \times 96$
cropping and horizontal flipping. We add the '+' suffix when reporting results
using data augmentation.

\begin{table*}[t]
\centering
\begin{footnotesize}
\begin{tabular}{|l|c|}
\hline
VGG-11 & \textbf{C}64-\textbf{MP}-\textbf{C}128-\textbf{MP}-\textbf{C}256(x2)-\textbf{MP}-\textbf{C}512(x2)-\textbf{MP}-\textbf{C}512(x2)-\textbf{MP}-\textbf{FC}512  \\
\hline
VGG-16  & \textbf{C}64(x5)-\textbf{MP}-\textbf{C}128(x4)-\textbf{MP}-\textbf{C}256(x4)-\textbf{MP}-\textbf{FC}256 \\
\hline
VGG-19 & \textbf{C}64(x2)-\textbf{MP}-\textbf{C}128(x2)-\textbf{MP}-\textbf{C}256(x4)-\textbf{MP}-\textbf{C}512(x4)-\textbf{MP}-\textbf{C}512(x4)-\textbf{MP}-\textbf{FC}512 \\
\hline
VGG-FACE & \textbf{C}64(x2)-\textbf{MP}-\textbf{C}128(x2)-\textbf{MP}-\textbf{C}256(x3)-\textbf{MP}-\textbf{C}512(x3)-\textbf{MP}-\textbf{C}512(x3)-\textbf{FCD}4096(x2)-\textbf{FC}1024\\
\hline
ResNet-110 &   \textbf{C}16-\textbf{R}64/16(x18)-\textbf{R}128/32(x18)-\textbf{R}256/64(x18)-\textbf{AP} \\
\hline
Pre-ResNet-110 &  \textbf{C}16-\textbf{PR}64/16(x18)-\textbf{PR}128/32(x18)-\textbf{PR}256/64(x18)-\textbf{BN}-\textbf{ReLU}-\textbf{AP} \\
\hline
DenseNet-40-12 & \textbf{CO}24-\textbf{D}168/12-\textbf{CR}168-\textbf{D}312/12-\textbf{CR}312-\textbf{D}456/12-\textbf{BN}456-\textbf{ReLU}\\
\hline
CNN-5 & \textbf{C}32-\textbf{MP}-\textbf{C}64-\textbf{MP}-\textbf{C}128-\textbf{MP}-\textbf{C}256-\textbf{C}256-\textbf{MP} \\
\hline
\end{tabular}
\end{footnotesize}
\caption{Summary of the deep network architectures used in our experiments. The
  last Fully-Connected layer, whose size depends on the number of classes used,
  is not shown. \textbf{C}X: Convolutional block. Kernel size is always is
  3x3. \textbf{MP:} Max pooling with kernel size 2x2 and stride 2. \textbf{FC}X:
  Fully-Connected layer. \textbf{R}X/Y and \textbf{PR}X/Y: ResNet and PreResNet
  Blocks Respectively. \textbf{AP}: Global Average Pooling layer. \textbf{CO}X:
  Plain convolutional layer. \textbf{D}X/G: DenseNet Block. \textbf{PC}X: Pre-BN
  convolutional block: (BN-conv.-ReLU). Kernel size is 1x1.  For all the modules, X
  is the number of output channels. Y is the number of inner channels for \textbf{R}
  and \textbf{PR} blocks and G is the growth rate for \textbf{D} blocks. See text
  for detailed block definitions. The OL\'E loss is always applied at the output
  of the last layer shown in this table.}
\label{tab:architectures}
\end{table*}

\textbf{Facescrub-500.} The Facescrub-500 dataset is obtained by selecting 500
of the 530 identities of the Facescrub dataset \cite{ng2014data}. The remaining
30 classes were used for evaluating out of sample performance.  We split the
images of the first 500 subjects into a training and a testing datasets with on
average $91$ images for training and $23$ images for testing per class
(80\%/20\% split).  We preprocess the images by aligning facial landmarks using
\cite{landmark} and crop the resulting aligned face images to $224 \times224$,
with color.

\subsection{Network Architectures}
Evaluated architectures are summarized in Table~\ref{tab:architectures}.

\textbf{VGG.} The VGG architecture \cite{simonyan2014very} consists of
blocks of convolutional layers with ReLU activation functions and Batch
Normalization (BN), linked by Max-Pooling layers and with one or more
fully-connected (FC) layers at the end. For VGG-11 and VGG-19 we use a publicly
available
implementation\footnote{\label{bearpaw}\scriptsize{\url{https://github.com/bearpaw/pytorch-classification}}}.
For VGG-16, we used the implementation from \cite{liu2016large} to allow for a
more direct
comparison\footnote{\scriptsize{\url{https://github.com/wy1iu/LargeMargin_Softmax_Loss}}}.

\textbf{VGG-FACE.} VGG-FACE is a variant of VGG optimized for face identification
\cite{Parkhi15}. In VGG-FACE, the convolutional blocks do not have BN and the
first two FC layers use Dropout with rate 0.5. We added an FC layer of size
1024, that was not present in \cite{Parkhi15}. This layer improves performance
for all tested models on Facescrub-500. We used the Caffe implementation of the
authors and fine-tune the weights provided by
them\footnote{\scriptsize{\url{http://www.robots.ox.ac.uk/~vgg/software/vgg_face/}}}. The
novel 1024 FC layer was initialized using ``Xavier'' initialization
\cite{glorot2010understanding}.

\textbf{ResNet and PreResNet.} ResNets \cite{he2016deep} are composed of
residual blocks. The concatenation of layers inside a ResNet block is
\emph{conv.-BN-conv.-BN-conv.-BN-ReLU}. The intermediate convolution layers
typically have one fourth the number of channels than the input and output
convolution layers of a block, see Table~\ref{tab:architectures}. The output of
each block is added to its input. The PreResNet architecture
\cite{he2016identity} is similar to ResNet except that inside the residual
blocks the order of the layers is inverted:
\emph{BN-conv.-BN-conv.-BN-conv.-ReLU}. No Dropout is used. For both variants we
used a publicly available implementation\footnoteref{bearpaw}.

\textbf{DenseNet.} DenseNets \cite{huang2016densely} are composed of three
DenseNet blocks. Each of these blocks is itself composed of multiple pre-BN
convolutional blocks (\emph{BN-conv.-ReLU}) with a small number of output
channels. Inside a DenseNet block, the input to each pre-BN convolutional block
is the concatenation of the output of all previous pre-BN convolutional
blocks. A transition pre-BN convolutional block is used between DenseNet
blocks. In our experiments, no bottleneck layers were used. We used Dropout of
0.2 for MNIST and SVHN, and no Dropout for CIFAR. We used a publicly available
implementation\footnote{\scriptsize{\url{https://github.com/andreasveit/densenet-pytorch}}}.
 
\subsection{Training Details}
Except for the Facescrub experiments, we always train the network from
scratch. VGG-16 uses ``MSRA'' initialization \cite{he2015delving}. For the rest
of the architectures, ``Xavier'' initialization was used
\cite{glorot2010understanding}.

In all the experiments except STL-10 and Facescrub we used SGD with Nesterov
momentum 0.9 for the optimization and batch size 64. We started with a learning
rate of 0.1 and decreased it ten-fold at 50\% and 75\% of the total training
epochs. For STL-10/Facescrub experiments, we used Adam with starting
learning rate $10^{-3}$/$10^{-5}$ and batch size 32/26. We used 164 epochs for
all architectures except for DenseNets, for which we used 300 epochs and
Facescrub where the finetuning is done for 12 epochs. The weight decay parameter
was always set to $\mu=10^{-4}$, except for STL-10+ and Facescrub, where
$\mu=10^{-3}$. Fig.~\ref{fig:convergence} shows typical convergence curves.

We implemented the OL\'E loss as a custom layer for Caffe and PyTorch. The
additional computation time is between 10\% and 33\% during training, depending
on the implementation and hardware,  because the SVD
runs on the CPU in the current implementation.

We adjusted the parameter $\lambda$ in \eqref{eq:combined_objective} with a
held-out validation set of 10\% of the training set. Note that the magnitude of the
OL\'E loss depends on the size and norm of the features matrices.  We selected
the value of $\lambda$ that produced the best result in the validation set,
averaging over 5 runs, see Fig~\ref{fig:lambda} for an example. We then
retrained the network with the entire training set and we computed the accuracy
on the test set at the end of the training.  To account for the randomness of
the training process, we repeated the training with the full training set 5
times.

\begin{figure} [t!]
  \begin{center} 
\begin{tabular}{c@{}c@{}c}  
\ \includegraphics[width=.15\textwidth]{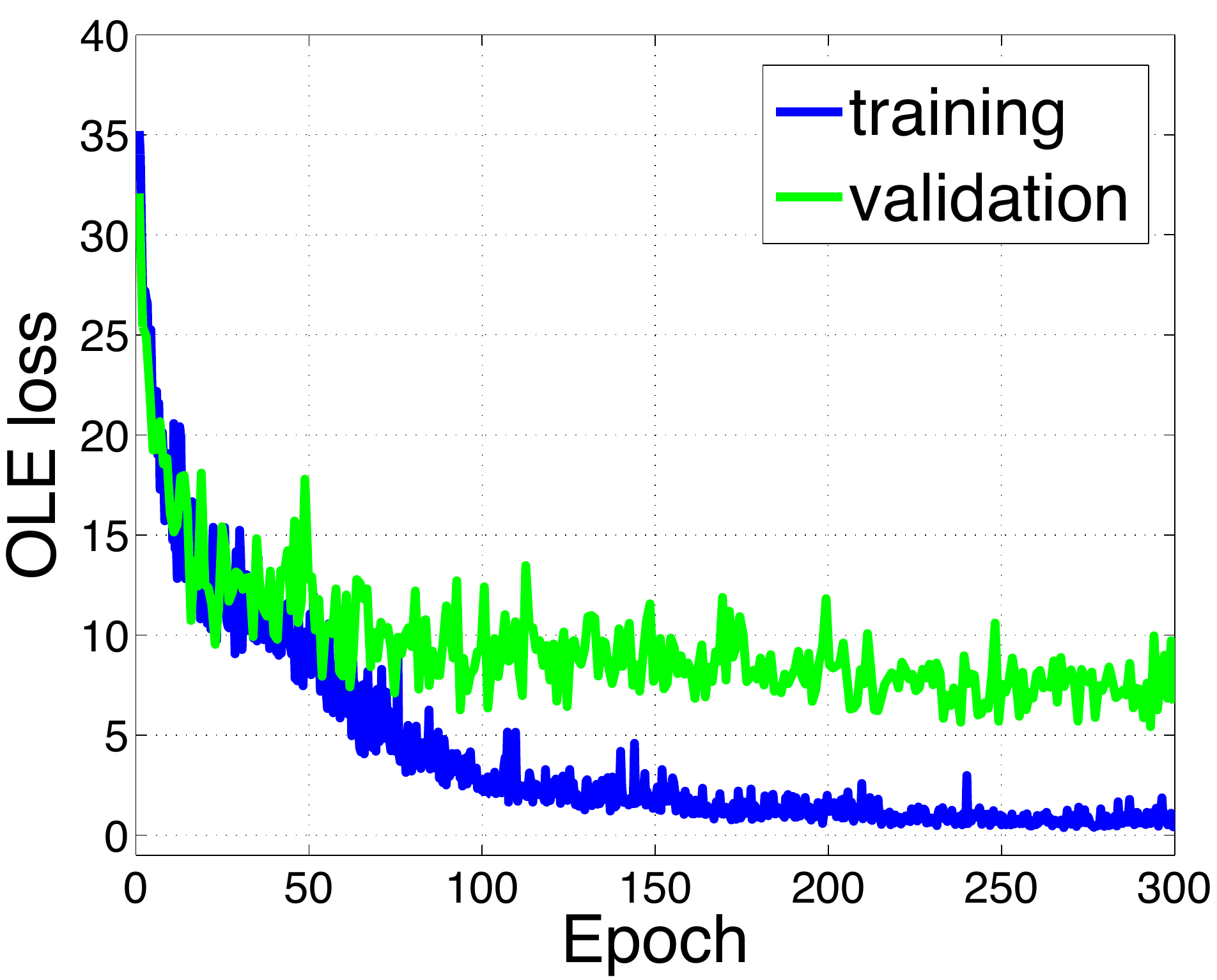}  &
 \includegraphics[width=.15\textwidth]{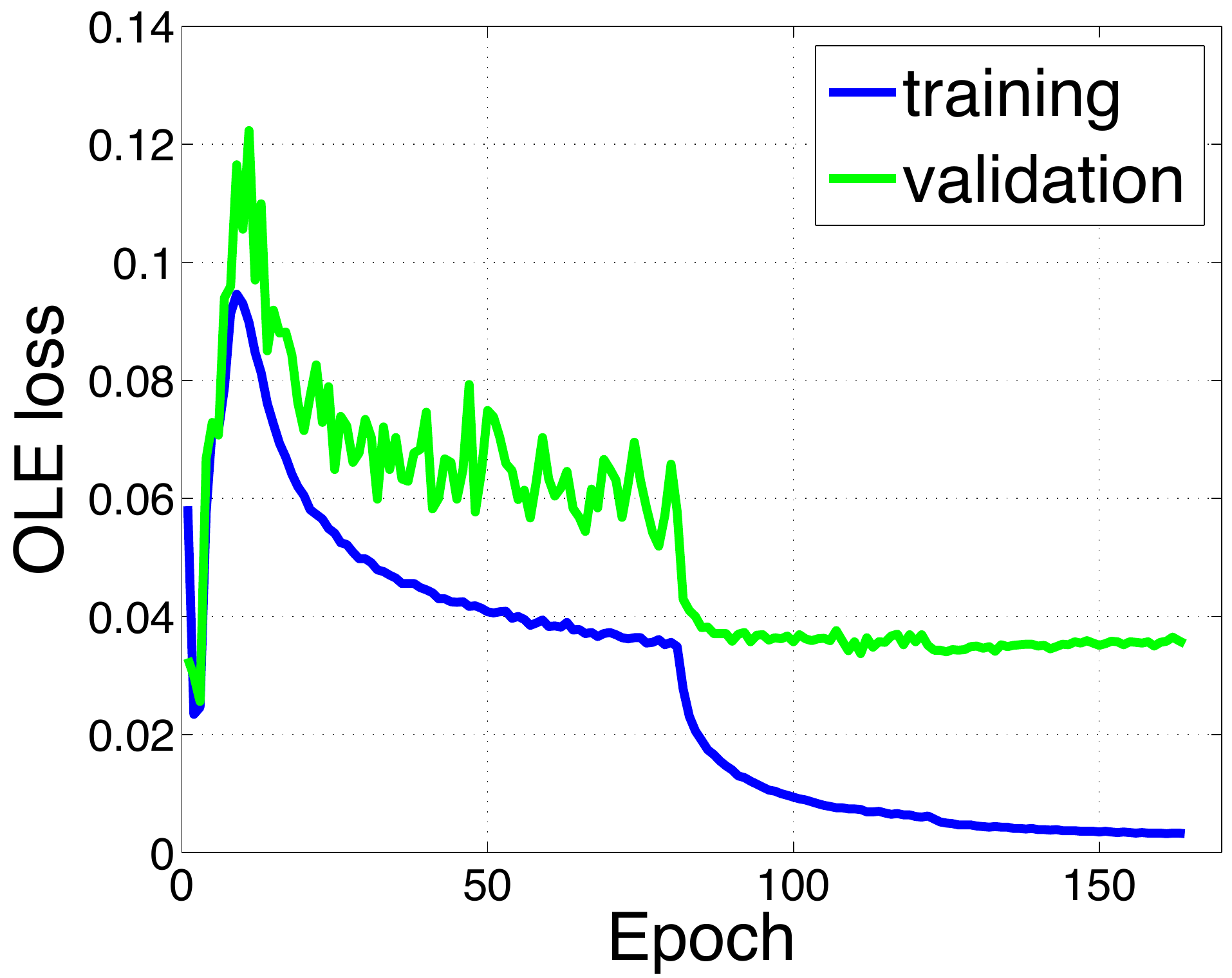}  &
 \includegraphics[width=.15\textwidth]{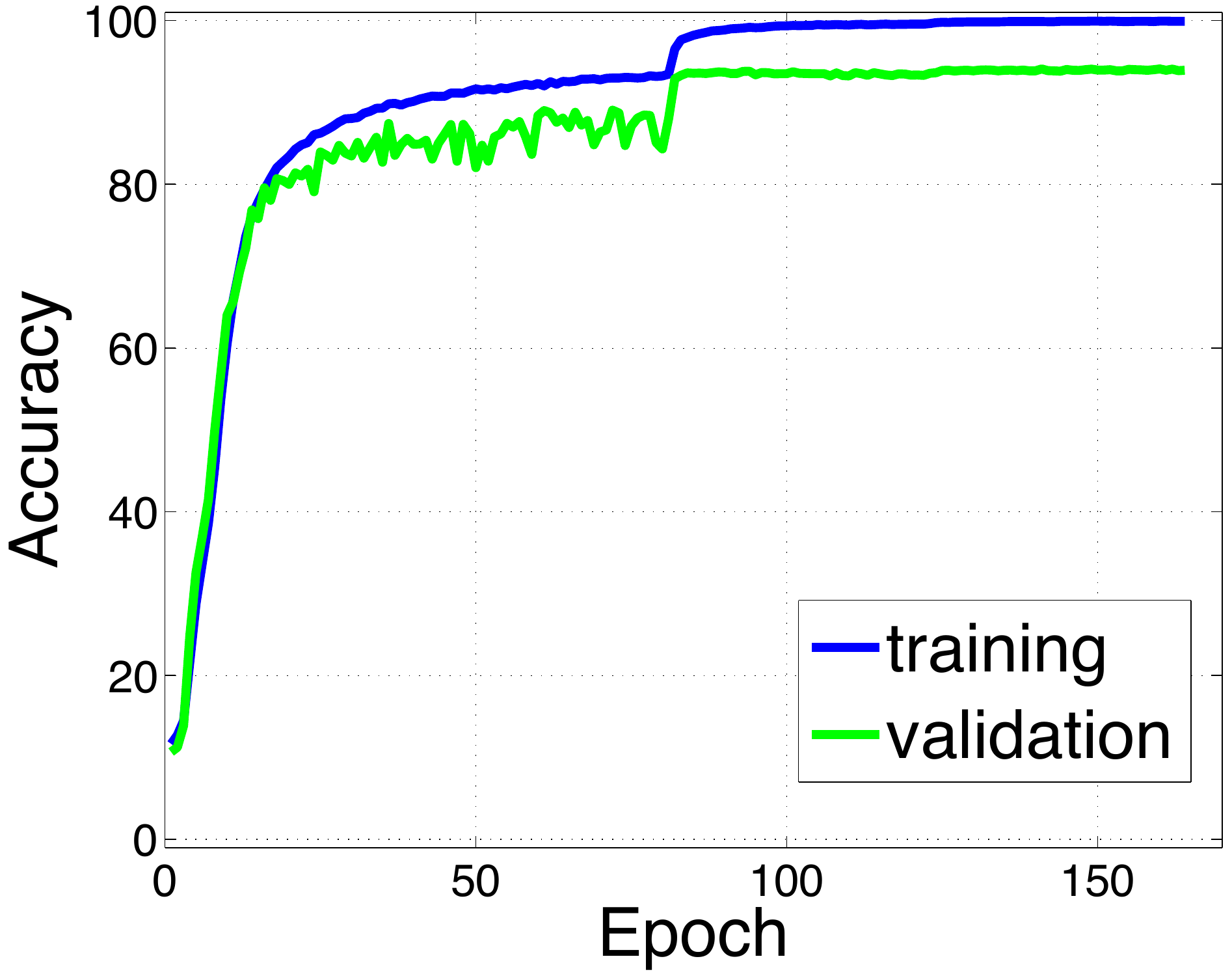}  \\
\footnotesize{(a)} & \footnotesize{(b)} & \footnotesize{(c)}
\end{tabular} 
\end{center} 
  \caption{Learning curves. \textbf{(a)} OL\'E loss when used standalone. Data
    and model from Fig~\ref{fig:toy_example}c. \textbf{(b) \& (c)} OL\'E loss and accuracy when used in combination with softmax loss for a ResNet-110 on CIFAR10+. Learning rate drops by 0.1 at 81 and 122 epochs. }
\label{fig:convergence}
\end{figure} 

\begin{figure} [t!]
  \begin{center}
\begin{tabular}{c} 
  \includegraphics[width=.20\textwidth,height=.10\textheight]{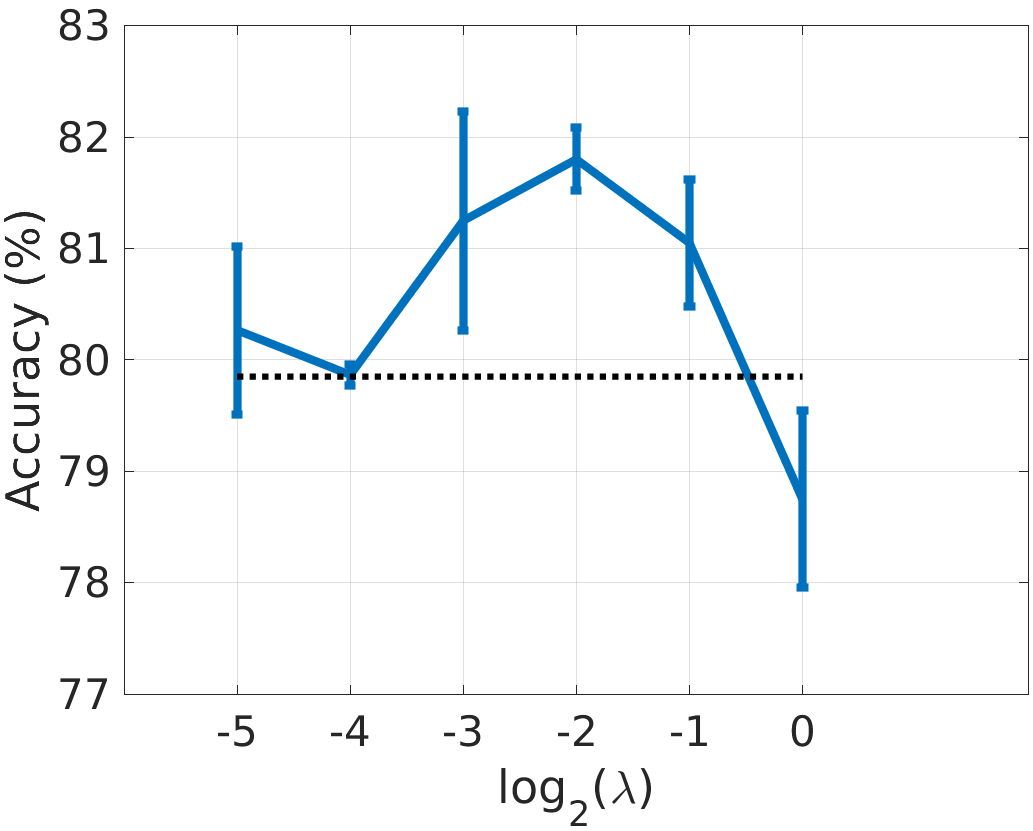}   
\end{tabular}
\end{center} 
  \caption{Validation of $\lambda$ \eqref{eq:combined_objective} for the STL-10+
    experiment. Based on this graph, we chose $\lambda=0.25$ for the final
    training. The dotted line is the average score obtained with the
    standard softmax loss.}
\label{fig:lambda}
\end{figure} 

\subsection{Visual Classification Results}
\begin{table*}[t]
\centering 
\begin{footnotesize}
\begin{tabular}{|c|c|c|c|c|c|}
\hline
Dataset & Architecture & $\lambda$ &  \% Error  ($L_o + \lambda \cdot L_s)$ & \%  Error  ($L_s$ only) &  Ref. Error (\%) \\
\hline \hline
  SVHN & DenseNet-40-12 \cite{huang2016densely} & $1/2$ & $\mathbf{3.62}\pm 0.04$ & $3.93\pm 0.08$ & $1.79$ \cite{huang2016densely} \\ 
\hline
MNIST  &  DenseNet-40-12 & $1/2$ & $\mathbf{0.78}\pm 0.04$ & $0.88\pm 0.03$ & - \\ 
\hline
 CIFAR10+ &DenseNet-40-12 & $1/8$ &  $\mathbf{5.30}\pm 0.26$ & $5.54\pm 0.13$ & $5.24$ \cite{huang2016densely}\\ 
CIFAR10+ &ResNet-110 \cite{he2016deep} &  $1/4$ & $\mathbf{5.39}\pm 0.25$ & $6.05\pm 0.8$ & $6.43$ \cite{he2016deep} \\ 
CIFAR10+ &VGG-19 \cite{simonyan2014very} &  $1/4$ & $\mathbf{7.13}\pm 0.2$ & $7.37\pm 0.11$ & - \\ 
CIFAR10+ &VGG-11 &  $1/2$ &    $\mathbf{7.73}\pm 0.14$ & $8.06\pm 0.22$ & - \\ 
\hline
CIFAR10 & VGG-16 \cite{liu2016large} & $1/2$ &   $\mathbf{7.22}\pm 0.14$ & $8.23\pm 0.13$ & $7.58$ \cite{liu2016large} \\ 
%
%
\hline
 CIFAR100+ & PreResNet-110 \cite{he2016identity} & $1/20$ & $\mathbf{22.8}\pm 0.34$ & $23.01\pm 0.19$ & $22.68\pm0.22$ \cite{he2016identity}\\ 
CIFAR100+ & VGG-19 & $1/10$ & $\mathbf{27.54} \pm 0.11$ & $28.04\pm 0.42$ & - \\ 
\hline
CIFAR100 &VGG-19 &  $1/10$ & $\mathbf{37.25}\pm 0.33$ & $38.15\pm 0.28$ &  - \\ 
\hline
FaceScrub-500 & VGG-FACE \cite{Parkhi15} &  $250$ & $\mathbf{1.55}\pm 0.02$ & $2.49\pm 0.01$ & - \\ 
\hline
 STL-10 & CNN-5 & $1/16$ & $\mathbf{25.42}\pm 0.20$ & $28.68 \pm 0.67$ & - \\ 
\hline
STL-10+ & CNN-5 & $ 1/4$ & ${\color{blue}\mathbf{16.68}}\pm 0.24$ & $18.22 \pm 0.27$ & $21.34$ \cite{thoma2017analysis} \\ 
\hline
\end{tabular}
\end{footnotesize}
\caption{Visual classification results. $L_o$ is the proposed OL\'E loss, $L_s$ is the standard softmax loss.}
\label{tab:visual_classification_results}
\end{table*}





Table~\ref{tab:visual_classification_results} shows the resulting classification
performance, with and without OL\'E. In all the experiments, we found a value of
$\lambda$ through validation such that the generalization of the network is
improved.  For reference, we include in the last column the performance
published in articles presenting the corresponding architecture for the same
datasets. Note that there could be implementation differences.

Compared to a state-of-the-art intra-class compactness method
\cite{liu2016large} using VGG-16 on CIFAR10, the lowest classification error we
obtained was $7.08\%$, compared to $7.58\%$ reported in \cite{liu2016large}.
Compared to the same network with only the standard softmax loss, a relative
reduction in the error of more than 12\% is obtained when adding the OL\'E loss.

The improvement in generalization performance is more important when only scarce
training data is available. In the Facescrub-500 experiment, where  less
than 100 samples are available per class on average, the error is dropped by
40\%. Fig.~\ref{fig:acc_vs_samples} illustrates how the advantage of using OL\'E is
more significant when fewer training data is available. We fixed $\lambda=0.25$ and
trained a CNN-5  (Table~\ref{tab:architectures}) on STL-10 without
data augmentation. We varied the number of samples from just 50 to 500
training samples per class, repeating each experiment 5 times.

In the STL-10+ experiment, the lowest classification error rate on the test set
we obtain is $16.43\%$, significantly lower than the reported state-of-the-art
error rate of $21.34\%$ in \cite{thoma2017analysis}. Note that
\cite{thoma2017analysis} uses the same training data and data augmentation
procedure.

\begin{figure} [b!]
  \begin{center}
\begin{tabular}{c} 
  \includegraphics[width=.2\textwidth]{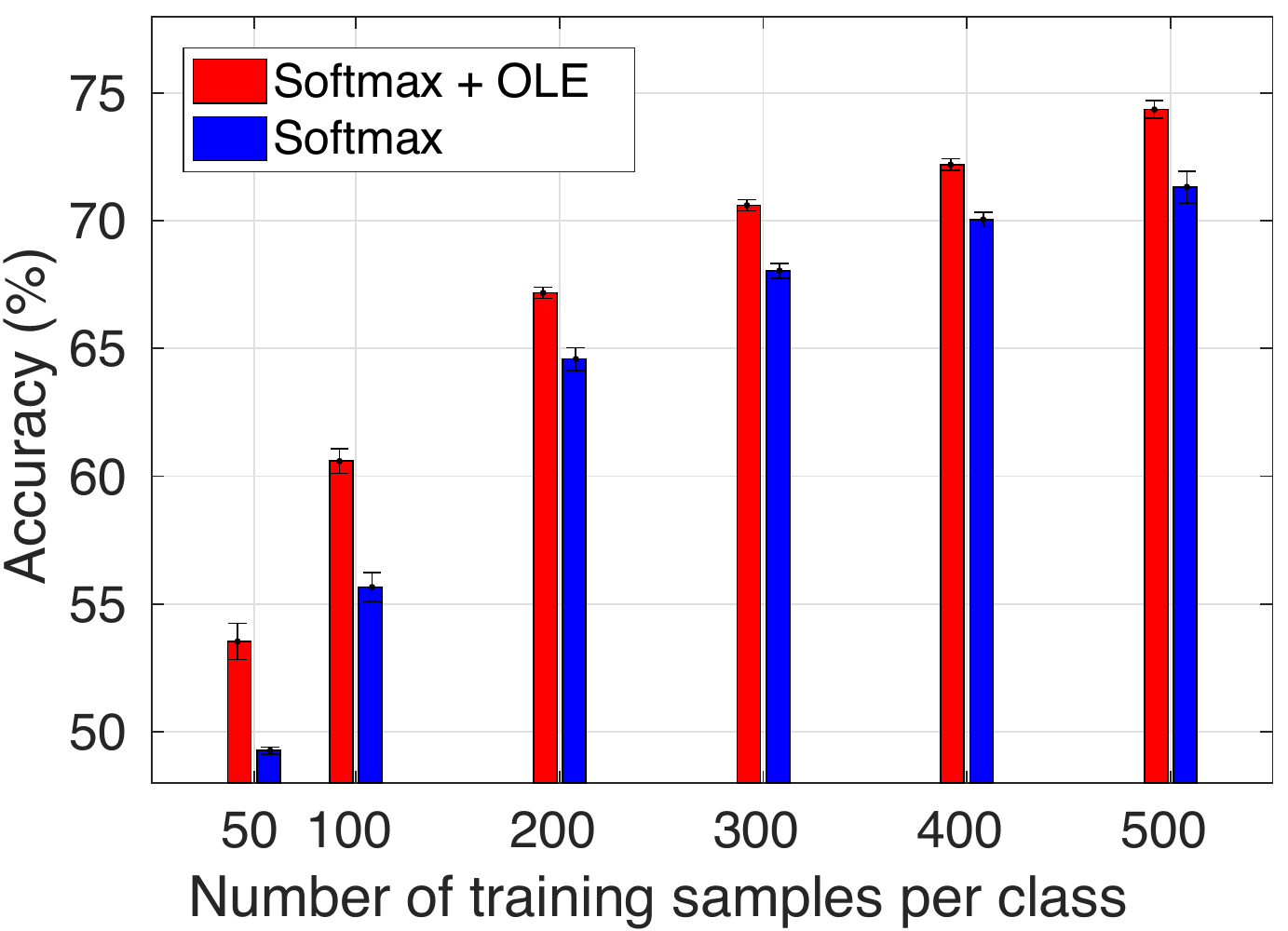}  \\  
\end{tabular}
\end{center} 
  \caption{Accuracy versus number of samples. The improved generalization when
    using OL\'E is more significant when fewer training samples are available. For
    this experiment we used STL-10 without data augmentation and we average ofer 5 runs.}
\label{fig:acc_vs_samples}
\end{figure} 

\subsection{Novelty Detection}
In this subsection we further analyze the Facescrub-500 experiment and show that
the OL\'E loss improves the novelty detection capability of the network. The
goal of novelty detection \cite{mandelbaum2017distance, pimentel2014review} is
to identify images in the test set that do not belong to any of the categories
in the training set.

Of the 530 identities in the Facescrub dataset, we took 500 identities to form
the Facescrub-500 dataset, and we left the remaining 30 identities as the novel
classes used to assess novelty detection performance. We use all the images from
the novel classes for testing (3,220 in total).

Ideally, since the novel identities are none of the known 500 subjects, their
500 class scores should all be low. We observe that this is the case when using
OL\'E, whereas when using only the softmax loss, there is typically one class
out of the known 500 that will have a confident softmax score (close to 1), see
Fig.~\ref{fig:nfa}. To show this, we varied a threshold $t\in[0,1]$ over the
softmax scores and defined the False Positive Ratio (FPR), as the number of
images of the novel classes whose softmax score is higher than $t$. In
Fig.~\ref{fig:nfa}(a) we plot the model accuracy in the 500 known subjects
against the FPR. When using the OL\'E loss, the model is able to reject most
unknown classes without significant loss of accuracy on the known
500. Fig.~\ref{fig:nfa}(b) shows the histogram of softmax scores for images of
the novel classes when using OL\'E. Most of the scores are concentrated around
$1/500$, reflecting the low confidence the OL\'E network gives to the novel
classes. On the other hand, the network trained with only the softmax loss  gives
high confidence scores to images of the novel classes, see
Fig.~\ref{fig:nfa}(c).

We verified that the OL\'E deep network did not lose face representation power
in the novel classes by running the standard verification benchmark on the
Labeled Faces in the Wild (LFW) \cite{huang2007labeled}. We observed similar AUC
($99.04\%$ vs $99.12\%$) and verification performance ($96.57\%$ vs $96.64\%$)
for the models with and without the OL\'E loss, respectively. 

\begin{figure} [t!]
  \begin{center}
\begin{tabular}{c@{}c@{}c} 
\includegraphics[width=.19\textwidth]{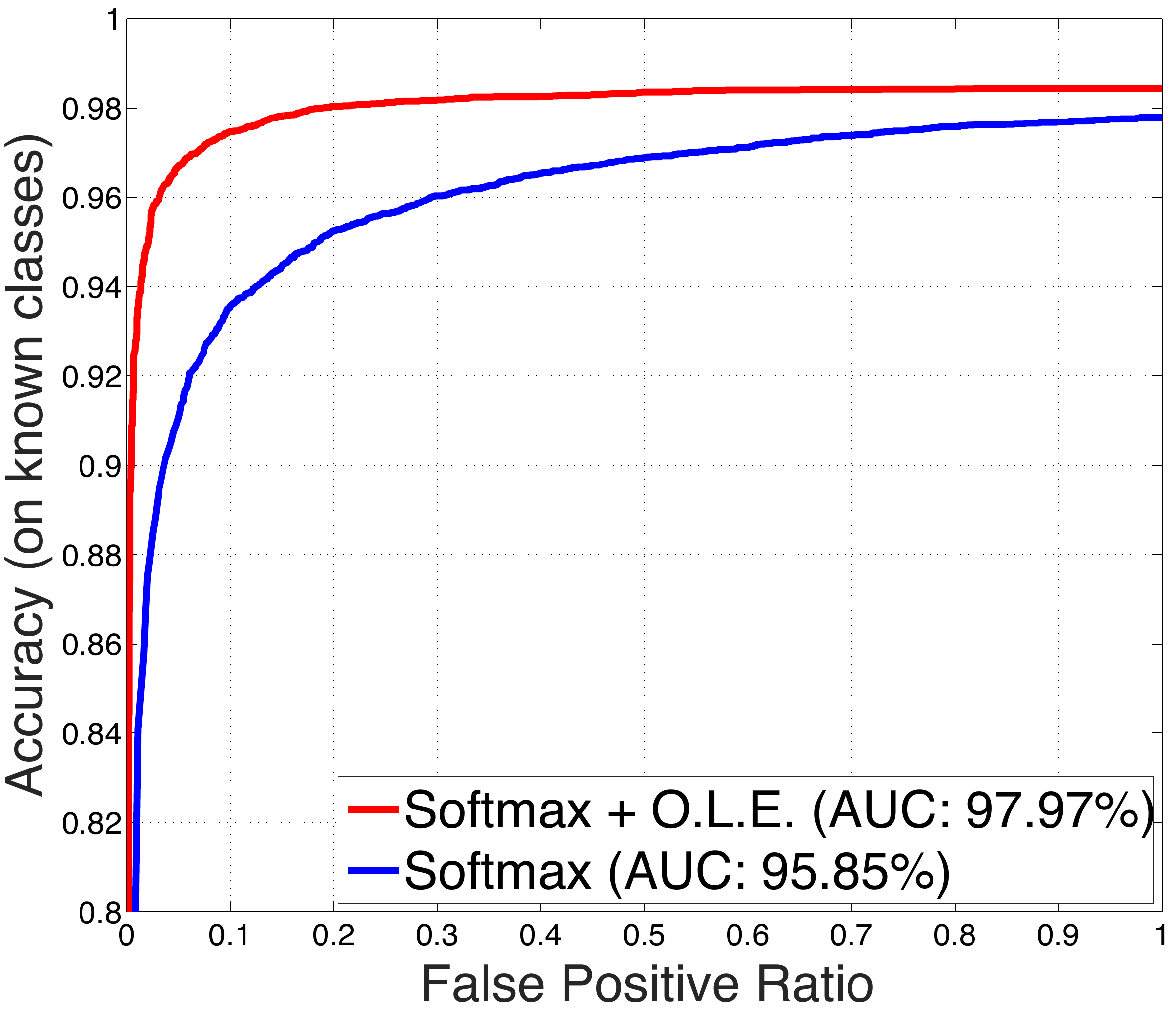}  &
 \includegraphics[width=.14\textwidth]{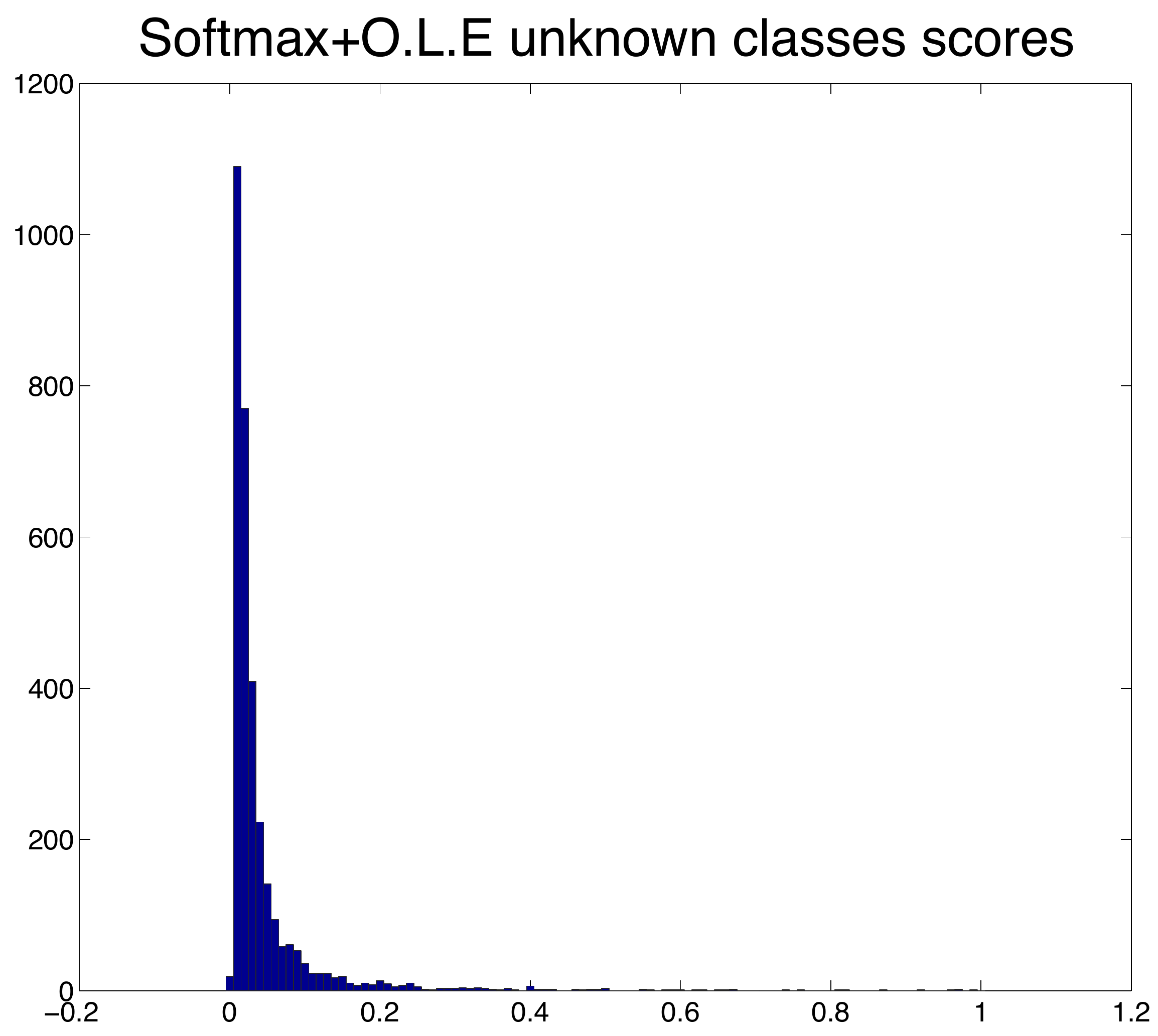} &
 \includegraphics[width=.14\textwidth]{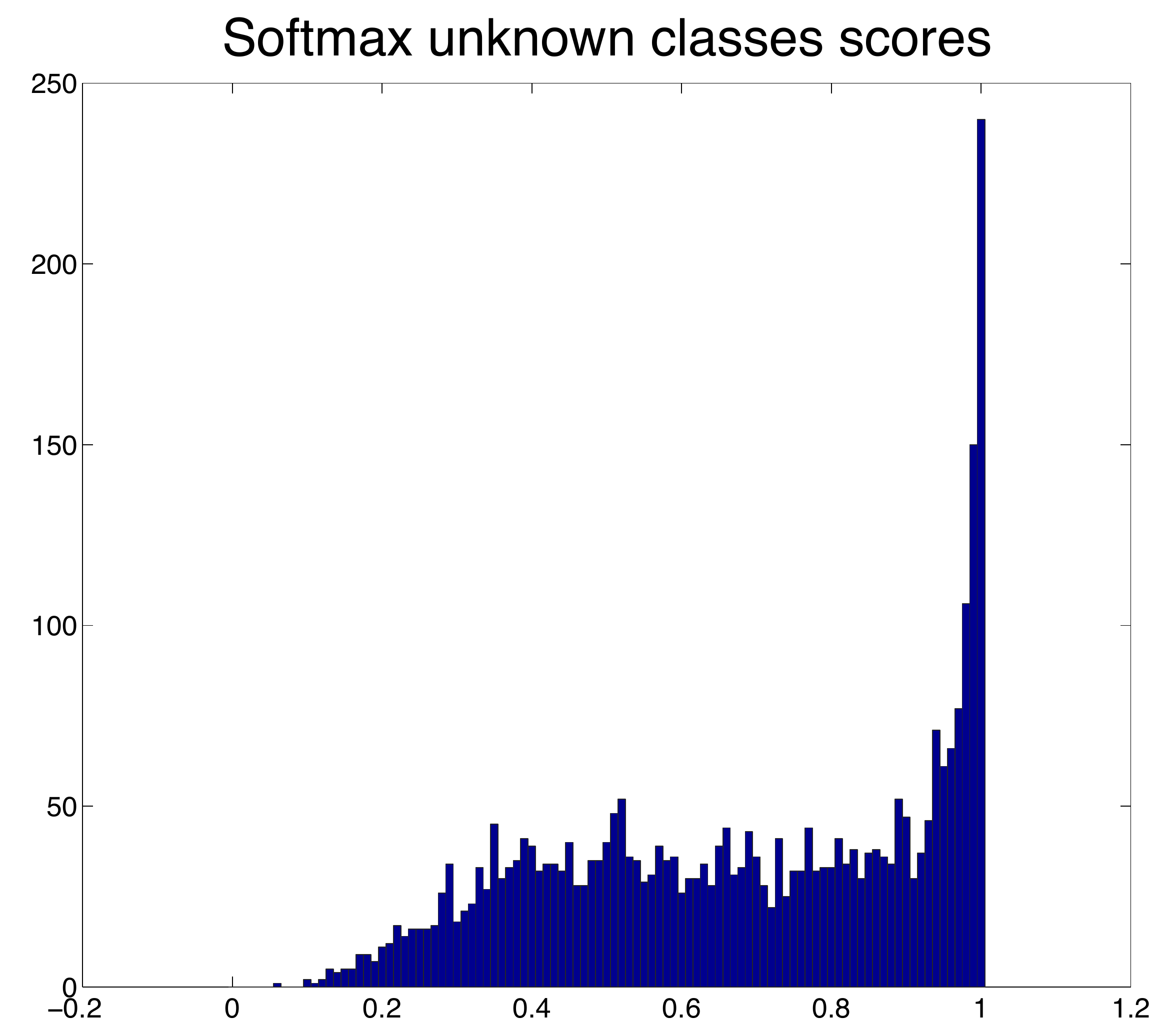}\\
\footnotesize{(a)} & \footnotesize{(b)}& \footnotesize{(c)}\\
\end{tabular}  


\end{center} 
  \caption{Application to novelty detection. \textbf{(a)} Accuracy on 500 known
    identities versus ratio of the images of the 30 novel classes that are
    wrongly classified as one of the known 500, when varying a threshold on the
    class scores. When using the OL\'E loss, more false positives can be avoided
    without losing classification performance on the known classes. \textbf{(b)
      \& (c)} Histogram of the maximum class scores for samples from the novel
    classes, with and without OL\'E, respectively. In \textbf{(b)}, scores are
    concentrated towards $1/500$, whereas in \textbf{(c)}, false high confidence
    scores are generally obtained. }
\label{fig:nfa}
\end{figure}

\subsection{Visualization of the Obtained Features} 
We illustrate the geometry of the learned deep features using OL\'E in
Fig.~\ref{fig:cifar_tsne}. In (a) and (b) we show a Barnes-Hut-SNE visualization
\cite{van2013barnes} of the obtained embedding for the validation set of
CIFAR10. The intra-class low-rank minimization reduces the
intra-class variance to only one dimension. The overall rank
maximization produces more margin (orthogonality) between classes.

In Fig.~\ref{fig:cifar_tsne} (c) and (d), we show the angle between the deep
features of (a) and (b). The 10,000 validation images are ordered by
class. With OL\'E, the relative angle is mostly 0 for images of the same
class, and 90 for images of different class. On the other hand, for the
standard softmax loss, the learned deep features have a larger intra-class
spread, and inter-class angles are not always orthogonal.

Finally, we show the spectral decomposition of the  deep feature
matrices for CIFAR10 validation set in Fig.~\ref{fig:singular_values}.  With OL\'E,
the deep features are concentrated along 10 principal dimensions, corresponding
to the learned orthogonal linear subspaces. For the
softmax loss, the deep feature matrix has its energy distributed along many
directions, reflecting the more spreading of the deep features vectors.

\begin{figure} [t!]
  \begin{center}
\begin{tabular}{c} 
 \includegraphics[width=.2\textwidth]{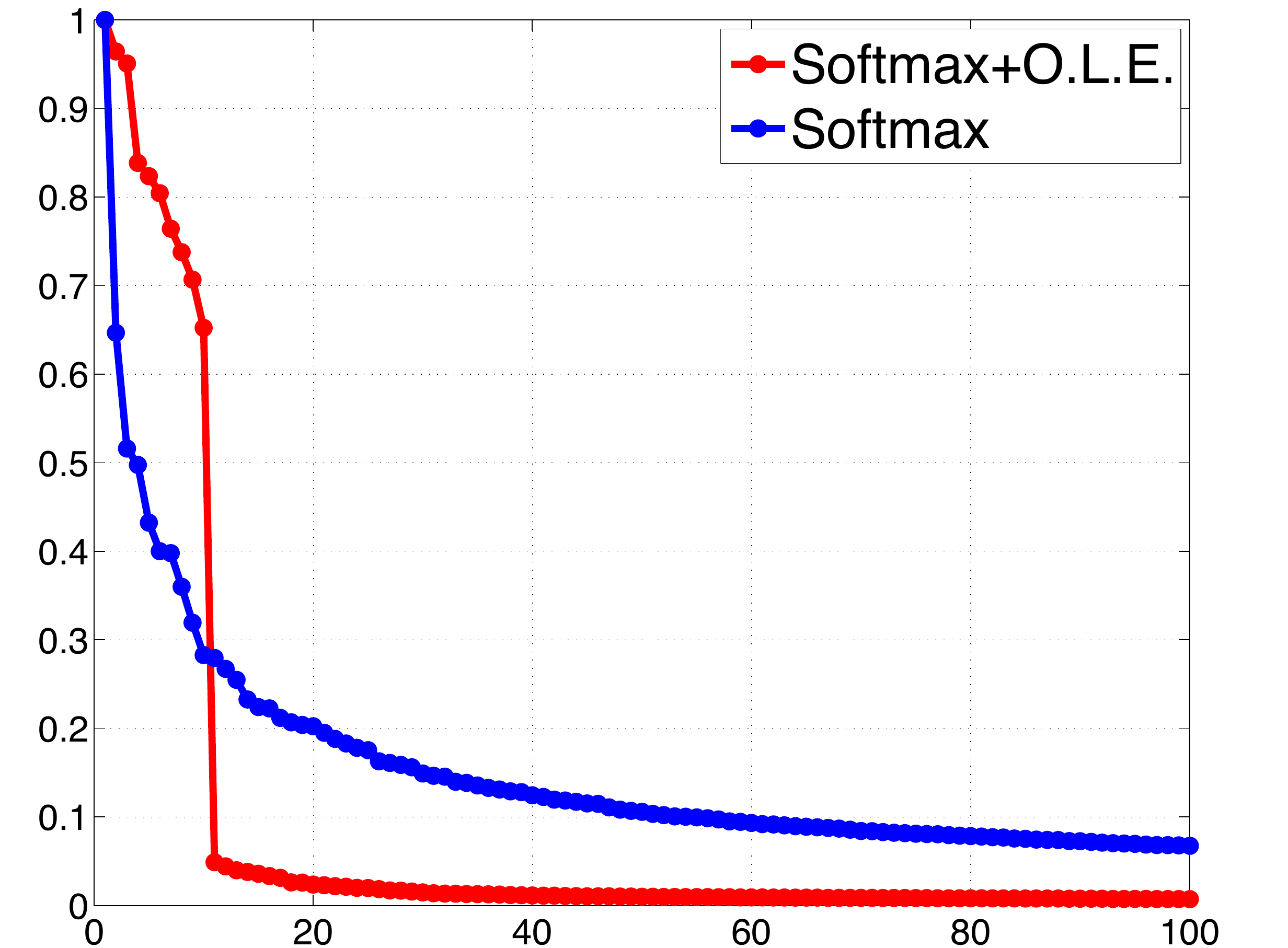} 
\end{tabular} 
\end{center} 
  \caption{Spectral analysis of the deep feature matrix obtained for CIFAR10
    validation data using VGG-16. We plot the normalized singular values of the
    feature matrix with and without OL\'E. When
    using OL\'E, the deep features are concentrated along 10 strong dimensions
    in the embedding space, corresponding to the linear subspaces where the
    features are compacted. For the standard softmax, the energy is
    distributed more evenly. }
\label{fig:singular_values}
\end{figure}

\section{Conclusions}\label{sec:conclusions}
We proposed OL\'E, a novel objective function for deep networks that
simultaneously encourages intra-class compactness and inter-class
separation of the deep features. The former is imposed as a low-rank constraint and the latter as an
orthogonalization constraint.  The proposed OL\'E loss can be used standalone as
a classification loss or in combination with the standard softmax loss for
improved performance.  We showed that  OL\'E produces more discriminative
deep networks and deep representations whose energy in the embedding space is
concentrated in a few dimensions. For classification, OL\'E is particularly
effective when  training data is scarce. Using
OL\'E, we significantly advance the state-of-the-art classification performance in the
standard STL-10 benchmark. The proposed loss introduces a new paradigm to deep
metric learning and we believe it will be a valuable tool in applications where
orthogonality in the deep representations is required.

\section*{Acknowledgements}
Jos\'e Lezama was supported by ANII (Uruguay) grant
PD\_NAC\_2015\_1\_108550. Work partially supported by NSF, DoD, NIH and Google.

\clearpage
\bibliography{article} 
\bibliographystyle{plain}

\end{document}